\documentclass{article}
\usepackage{amsmath}
\usepackage{amsfonts}
\usepackage{amssymb}
\usepackage{mathrsfs}  
\usepackage{amsthm}
\newtheorem{theorem}{Theorem}
\newtheorem{theorema}{Theorem}

\usepackage{graphicx}
\usepackage{epigraph}

\usepackage{setspace}
\usepackage{moresize}

\usepackage{pgfplots}
\usepackage{tikz}
\usetikzlibrary{trees}
\tikzset{
  treenode/.style = {shape=rectangle, rounded corners,
                     draw, align=center,
                     top color=white, bottom color=blue!20},
  root/.style     = {treenode, font=\Large, bottom color=red!30},
  env/.style      = {treenode, font=\ttfamily\normalsize},
  dummy/.style    = {circle,draw}
}

\usepackage{hyperref}
\hypersetup{
  pdftitle={Against Proxy Optimization},
  hidelinks,
  }
 
\usepackage[backend=bibtex, style=authoryear, useprefix, maxcitenames=2, maxbibnames=10, giveninits=true]{biblatex}
\bibliography{optimize.bib} 
 
\title{\textsc{Against Proxy Optimization}\thanks{Forthcoming in \emph{Philosophy and Phenomenological Research}. Thanks to Mathias B{\"o}hm, Mikayla Kelley, Lucy Mason, Aydin Mohensi and David Thorstad for helpful comments on previous drafts. Thanks to Mel Andrews, Kelli Bar, Vincent Conitzer, Julien Dutant, Mario G{\"u}nther, James Hodgson, Wesley Holliday, Lucy James, Dejan Makovec, Siddharth Muthukrishnan, John Schindler, James Shaw, David Wallace, Cole Wyeth and audiences at the Center for Philosophy of Science at Pittsburgh, Agent Foundations Workshop at CMU, Formal and Experimental Philosophy Workshop at University of Buffalo, 99th Joint Session of the Aristotelian Society and the Mind Association at University of Glasgow, Workshop on Social Choice for AI at IASEAI'26 in Paris, my spring 2026 seminar on AI and `Rational for What?' at National University of Singapore for helpful discussion.}}
\author{Sven Neth \\ University of Pittsburgh}
\date{}

\begin{document}

\maketitle

\begin{abstract}
I discuss conditions under which maximizing a proxy utility function is harmful and suggest this poses problems for applying decision theory.
\end{abstract}

\section{Introduction}

Value is complex and hard to measure, so we often measure and try to improve a simplified proxy metric. For example, we care about education but measure test scores. We care about health but measure step count. We care about welfare but measure GDP. We care about good research but measure citation counts. In all of these cases, the proxy metric is related to the value that we care about but it's not the whole story. 

It's easy to feel like something is going wrong when we replace complex values by simplified metrics. \textcite{Nguyen2024} describes this phenomenon as \emph{value capture} and argues that it is problematic because the proxy metric distorts the values we care about. Economists discuss related phenomena as \emph{Goodhart's Law}, which says that when a measure becomes a target, it ceases to be a good measure \parencite{Goodhart1975, Kerr1975}. Researchers in machine learning worry about \emph{reward hacking}, which is when maximizing a simplified reward function leads to unexpected and often undesirable behavior \parencite{Hubinger2019, Skalse2022, Karwowski2024} and \emph{wireheading}, when agents manipulate their reward signal \parencite{Everitt2021}.

While I agree with Nguyen that something is going wrong in these cases, it is not entirely clear what the problem is. Nguyen is surely right that proxy metrics distort our values. By definition, they don't capture all aspects of our values. But there can also be advantages to using proxies. For example, we can use them to make decisions when true values are too complicated to measure. So it's not clear whether proxies are bad all things considered. The question I want to focus on is: under what conditions does maximizing proxies lead to bad outcomes? Following \textcite{John2024}, I propose \emph{proxy failure} as term for this phenomenon.
 
My goal is to make progress on the question when proxy failure happens by studying a model introduced by \textcite{Zhuang2020}. In this model, we have a true utility function and a proxy utility function. They claim that, given a plausible condition, maximizing the proxy eventually leads true utility to go down. This undercuts the pragmatic case for proxies: we can use them to make decisions but doing so will eventually be bad from the point of view of what we care about. I critically discuss the condition proposed by \textcite{Zhuang2020} and argue that two other conditions do a better job of capturing proxy failure. I end by some reflections on how to avoid proxy failure.

\section{Proxy Optimization}

Let us start by looking at the model of proxy optimization introduced by \textcite{Zhuang2020}. The key feature is a distinction between true utility and proxy utility. They want to model the problem of AI alignment: we have a true utility function but can't write it down so we tell an AI system to maximize some proxy.\footnote{This assumes that we can model AI systems as utility maximizers which is controversial \parencite{Bales2025}. I will sidestep this debate here.} However, their model is more general and can represent proxy optimization in other contexts as well, like when we care about education but measure test scores.

Assume that the state of the world can be represented as a vector of $n$ real-valued features. These features represent various aspects of the world relevant to how good things are. For example, one feature might represent the well-being of one person and another feature might represent the well-being of another person. Features can also represent global properties, for example how much happiness and beauty there are. Given the assumption that features are real-valued, the state of the world can be represented as point in $n$-dimensional Euclidean space. We write $s_i$ for the value of the $i$-th feature in state $s$. Each feature $i$ has a lower bound $b_{i}$ which represents the lowest possible value this feature can obtain. This model connects to recent work on value pluralism since we can think of features as  dimensions of value \parencite{Hedden2024}. However, features might also represent aspects of the world which are not themselves dimensions of value.

In this model, states are different from the way \textcite{Savage1972} thinks about states. For Savage, we are uncertain about the state but can't change it. Decision theory is about finding the best course of action given our uncertainty. In contrast, \textcite{Zhuang2020} model optimization as a process which changes the state to increase utility. In addition, there is no uncertainty in the model. There are ways to incorporate uncertainty: we could model uncertainty as part of the state or add a probability measure over states. I briefly sketch one way to do so in the appendix. However, for ease of exposition I will mostly bracket uncertainty.

Assume there is a \emph{true utility function} $u : \mathbb{R}^n \to \mathbb{R}$. This utility function is sensitive to all features and tells us how good the state of the world is. The true utility function can represent different kinds of value: moral, aesthetic, prudential and so on. For our purposes, it does not matter which of these interpretations we adopt. The key idea is that the utility function represents some complex value which can be sensitive to many features of the world. The true utility function encodes how to make tradeoffs between features. If features represent the well-being of different people, this assumes we have solved the problem of interpersonal utility comparison \parencite{Sen1999, Nebel2023}. There are analogous problems for aggregating dimensions of value more generally, which I will sidestep here. My focus is on an orthogonal problem which arises even if there is some correct way to make tradeoffs.

We also have a \emph{proxy utility function} $\hat{u} : \mathbb{R}^k \to \mathbb{R}$ where $k < n$. The proxy is not sensitive to all features which true utility is sensitive to but only to a subset of them. This captures the idea that the proxy is a simplified version of the true utility function. This model can be motivated by limitations of measurement. It's impossible or difficult to measure all features relevant to value so we focus on a subset of features which we can easily measure. This fits with our initial motivation. For example, health is a complex value which depends on many factors, many of which are hard to measure. But some factors like step count are easy to measure, so we use them as proxy for health.

Being sensitive to fewer features is not the only way to model proxies. Call this a \emph{subset proxy}. There are other ways to think about proxies. The proxy could be sensitive to the same features as the true utility but have a simpler functional form. For example, suppose true utility is a weighted sum $\alpha x + \beta y$ where the weights $\alpha$ and $\beta$ measure how important features $x$ and $y$ are. We might have no idea what the weights are and use the unweighted sum $x + y$ as a proxy. This is different from a subset proxy. Alternatively, it might be that proxy utility and true utility are defined on disjoint sets of features. Perhaps we only care about feature $x$ but use feature $y$ as proxy because we believe $x$ and $y$ are correlated \parencite{Laidlaw2025}. This kind of \emph{correlated proxy} is also not directly covered by the model but as we will see later, we can relax some assumptions to cover it. These are limitations to the model. Nonetheless, subset proxies are worth studying and well-motivated by limitations of measurement.

The true utility function can be as complex as we like. If we care about fairness and justice, for example, we can build these concerns into the utility function, `consequentializing' as much as we like. This might be a worry if we are concerned with defending consequentialism as a substantive ethical theory \parencite{Brown2011} but that is not our purpose here. Rather, it's good that many different ethical frameworks can be represented by the utility function. We are \emph{not} assuming that we can explicitly write down the true utility function and use it for decision-making---presumably, it is too complicated for that.

Is the true utility function unique? A \emph{positive affine transformation} of $u$ is any $v = au + b$ with real numbers $a > 0$ and $b$. In decision theory, the utility function is normally only unique up to positive affine transformations. Some of the conditions discussed later require even less uniqueness. So the model does not commit us to a single true utility function but only to a set of utility functions up to some transformation. This is good news because it means that you can accept the model even if you are skeptical that value can be quantified by a unique utility function. But for now, it's best to pretend as if there is a unique utility function. We'll throw away the ladder later.

We have modeled states as points in $n$-dimensional Euclidean space. \textcite{Zhuang2020} assume that not every state is feasible. There is a cost function $c : \mathbb{R}^n \to \mathbb{R}$ which measures how costly a state is to realize and state $s$ is feasible only if $c(s) \leq 0$. This captures the idea that we have finite resources and can't maximize all features at the same time. Furthermore, features are constrained by their lower bound, so the set of feasible states is $\mathcal{S} =\{ s \in \mathbb{R}^n : c(s) \leq 0 \textrm{ and } s_i \geq b_i \textrm{ for every $1 \leq i \leq n$}\}$. \textcite{Zhuang2020} assume that $u$, $\hat{u}$ and $c$ are all strictly increasing in each feature and continuous. The assumption of strictly increasing utilities captures the idea that features are good. Other things equal, we want more of each. 

At a first pass, proxy failure happens if, as we maximize the proxy, true utility eventually goes down, like shown in Figure \ref{proxy-fail}. Before turning to the question under what conditions proxy failure happens, it is helpful to look at an example. 

\begin{figure}
\centering
\begin{tikzpicture}
\centering
\begin{axis}[
    width=5cm,
    height=5cm,
    xlabel={optimization power},
    ylabel={value},
    xmin=0, xmax=10,
    ymin=-0.5, ymax=6,
    xtick=\empty,
    ytick=\empty,
    axis lines=left,
    legend style={
        at={(0.03,0.97)},
        anchor=north west,
        draw=black,
        fill=white,
        font=\small,
    },
    clip=false,
]

\addplot[
    black,
    solid,
    thick,
    domain=0:9,
    samples=100,
] {0.5*x + 0.5};
\addlegendentry{proxy utility}

\addplot[
    black,
    densely dashed,
    thick,
    domain=0:9,
    samples=100,
] {-0.08*(x-3.5)^2 + 2};
\addlegendentry{true utility}

\end{axis}
\end{tikzpicture}
\caption{Proxy failure.}\label{proxy-fail}
\end{figure}

Suppose we care about two features $x$ and $y$, for example happiness and beauty. States are points in two-dimensional Euclidean space. Assume true utility is the sum of $x$ and $y$. The set of feasible states is $\mathcal{S} = \{ \langle x,y \rangle \in \mathbb{R}^2:  2^x + 2^y \leq 20 \textrm{ and } x \geq 0 \textrm{ and } y \geq 0\}$, which means that at some point we have to make tradeoffs. Figure \ref{2dvalue} represents this example where lighter shades indicate higher true utility, visualizing gradients of bliss. The states below the arc going from approximately $\langle 0,4.25 \rangle$ to  $\langle 4.25,0 \rangle$ are feasible. The feasible state with maximum utility is found by following the diagonal line where $x=y$ to the boundary of the feasible region at $x = y \approx 3.32$. 

Consider the proxy $\hat{u}(\langle x,y \rangle) = x$. Maybe beauty can't be measured easily so we just optimize for happiness. If we maximize this proxy, we go as far right on the $x$-axis as possible. At some point we have to decrease the value on the $y$-axis to keep the cost below zero. We end up at the lower right corner with true utility around 4.25 which might be lower than at our starting point, for example if we start at $\langle 3,3 \rangle$. This shows how proxy failure as depicted in Figure \ref{proxy-fail} can arise.

\begin{figure}
\centering
\includegraphics[width=150pt]{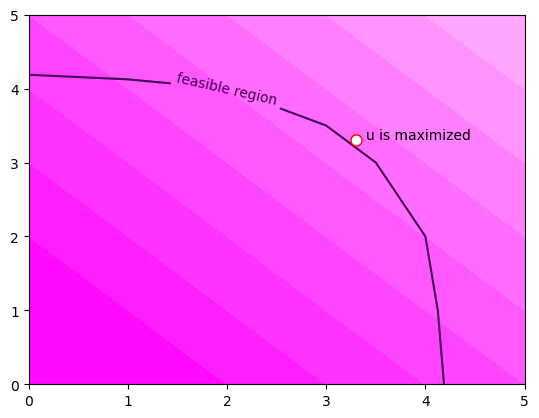}
\caption{Two-dimensional value.}\label{2dvalue}
\end{figure}

While this is a simple toy example, \textcite{Zhuang2020} observe similar behavior in a more complicated example. \textcite{Gao2023} observe similar overoptimization curves in a more realistic setting of reinforcement learning with a simplified reward model. This leads to the question: how general is proxy failure? Under what conditions do we see this kind of pattern?

\section{Proxy Failure}

\textcite{Zhuang2020} claim that proxy failure happens under plausible conditions. I explain their argument and raise some problems. Then, I suggest an improved story about when proxy failure happens.
 
Fix a true utility function $u : \mathbb{R}^n \to \mathbb{R}$. Consider any proxy $\hat{u} : \mathbb{R}^k \to \mathbb{R}$ with $k < n$. There are some features that the true utility function is sensitive to but the proxy is not. Call these \emph{unmentioned features}.  What happens to  unmentioned features as we maximize proxy utility?

Let's look at how \textcite{Zhuang2020} model optimization. An \emph{optimization sequence} is a sequence of feasible states $s^0, s^1, ...$ which converges to a feasible state where proxy utility is maximized, so $\lim_{t \to \infty} \hat{u}(s^t) = \sup_{s \in \mathcal{S}} \hat{u}(s)$.\footnote{The extreme value theorem says that a continuous function attains a maximum value on a compact set \parencite[p.~89]{Rudin1976}, so there is some feasible state where proxy utility is maximized. Note that while $\mathcal{S} =\{ s \in \mathbb{R}^n : c(s) \leq 0 \textrm{ and } s_i \geq b_i \textrm{ for every $1 \leq i \leq n$}\}$ is compact, $\{ s \in \mathbb{R}^n : c(s) \leq 0 \}$ is not compact. This will become relevant below.} We make no further assumptions about how the optimization sequence moves through state space. 

Say that a feature is minimized if it is equal to its lower bound. \textcite{Zhuang2020} show:
\begin{theorem}
If the optimization sequence $s^0, s^1,....$ converges to some $s \in \mathcal{S}$, then all unmentioned features are minimized at $s$.
\end{theorem}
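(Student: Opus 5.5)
The plan is to argue by contradiction, exploiting that the limit state $s$ must maximize proxy utility while the cost constraint can always be relaxed by lowering an unmentioned feature. First, continuity of $\hat{u}$ (viewed as a function of the state through its $k$ mentioned features) together with closedness of $\mathcal{S}$ gives $s \in \mathcal{S}$ and $\hat{u}(s) = \lim_{t\to\infty}\hat{u}(s^t) = \sup_{s' \in \mathcal{S}} \hat{u}(s')$. So $s$ is a feasible proxy maximizer, and it suffices to show that every feasible proxy maximizer has all unmentioned features at their lower bounds.

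Suppose instead that some unmentioned feature $j$ has $s_j > b_j$. I would build a feasible state with strictly higher proxy utility. Pick any mentioned feature $i$. Since $c$ is strictly increasing in each feature, lowering $s_j$ to some $s_j - \delta$ with $\delta > 0$ and $s_j - \delta \geq b_j$ gives a state $s'$ with $c(s') < c(s) \leq 0$. By continuity of $c$, raising feature $i$ by a small $\epsilon > 0$ from $s'$ gives a state $s''$ with $c(s'') < 0$ still. The lower bounds hold in $s''$: feature $j$ stays at least $b_j$, and feature $i$ only increased, so $s'' \in \mathcal{S}$. Because $\hat{u}$ ignores feature $j$ and is strictly increasing in feature $i$, we get $\hat{u}(s'') > \hat{u}(s)$. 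This contradicts the maximality of $s$, so $s_j = b_j$ for every unmentioned $j$.

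The main obstacle is the bookkeeping in the first step rather than the perturbation. One has to make sure the limit really lies in $\mathcal{S}$, which follows from continuity of $c$ and the non-strict inequalities defining $\mathcal{S}$. One also has to check that the supremum is attained at it, which needs continuity of $\hat{u}$ and is why the footnote's appeal to compactness matters. I would also state explicitly the implicit assumption that at least one mentioned feature exists, i.e.\ $k \geq 1$; otherwise $\hat{u}$ is constant and the claim can fail. Finally, I would note that the strict monotonicity of $c$ in the unmentioned feature is exactly what frees up cost: this is the assumption doing the real work.
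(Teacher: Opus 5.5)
Your proof is correct and follows essentially the same route as the paper's. You suppose an unmentioned feature sits above its lower bound, lower it so that $c$ becomes strictly negative by strict monotonicity, use continuity of $c$ to raise a mentioned feature slightly while staying feasible, and then contradict the maximality of the limit via strict monotonicity of $\hat{u}$. Two of your additions are worthwhile: your explicit argument that the limit is a proxy maximizer, which uses continuity of $\hat{u}$ and which the paper's final sentence relies on tacitly, and your caveat that at least one mentioned feature must exist. One small correction to your commentary: that step needs only continuity of $\hat{u}$ together with $s^t \to s$ and $\hat{u}(s^t) \to \sup_{s' \in \mathcal{S}} \hat{u}(s')$, not compactness.
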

As proxy utility is maximized, all unmentioned features are driven to their lowest possible value. If they are not, we can improve proxy utility by decreasing unmentioned features and increasing some feature the proxy is sensitive to. This looks bad since we're allocating all resources to the things we can measure and none to the things we can't measure. We have to make tradeoffs since the cost function is strictly increasing. If features are dimensions of value, the worry is that from the perspective of true utility, such `interdimensional tradeoffs' turn bad at some point but we'll keep making them anyways, which renders proxy optimization pathological.

Here is an example. Suppose our department measures and maximizes publication count as proxy for research success. At first, this increases our research success. But we have limited time so we start making shady tradeoffs. There are features not tracked by our proxy, for example how significant and original the published papers are. Since we just care about publication count, we stop caring about these other things and flood journals with variations of the same paper. Publication count keeps increasing but  research success is going down.

The theorem only requires that proxy utility is strictly increasing in each feature that it is defined on, not that true utility is strictly increasing in each feature. As I explain in the appendix, this allows us to capture correlated proxies. Also note that the theorem says that proxy optimization leads to all unmentioned features being minimized \emph{in the long run}, as the optimization sequence converges to an optimal state. The most we can show on this basis is that proxy optimization is bad in the long run. This leads to familiar worries about long-run convergence, since ``\emph{in the long run} we are all dead" \parencite[p.~80]{Keynes1923}.\footnote{Long-run convergence and its limits are often discussed in Bayesian epistemology in the context of convergence results \parencite{Nielsen2021}.} We can't say anything about rates of convergence without substantive assumptions about what the optimization sequence looks like. 

The theorem does not yet characterize proxy failure. Minimizing all unmentioned features has bad vibes but sometimes, proxy and true utility can be maximized at the same time. However, it suggests the following argument:\footnote{This quote by Stuart Russell explains the intuition behind the argument:
\begin{quote}
A system that is optimizing a function of $n$ variables, where the objective depends on a subset of size $k<n$, will often set the remaining unconstrained variables to extreme values; if one of those unconstrained variables is actually something we care about, the solution found may be highly undesirable. This is essentially the old story of the genie in the lamp, or the sorcerer's apprentice, or King Midas: you get exactly what you ask for, not what you want.
\end{quote}
The quote is from a discussion contribution found here: \url{https://www.edge.org/conversation/the-myth-of-ai\#26015}.}
\begin{enumerate}
\item Maximizing proxy utility leads to all unmentioned features being minimized.
\item It's bad if all unmentioned features are minimized.
\item Therefore, maximizing proxy utility is bad.
\end{enumerate}
The first premise is given by Theorem \ref{thm1a}. But how can we support the second premise? In the rest of this section, I discuss different ways to do so.

\subsection{Compactness}

\textcite{Zhuang2020} suggest the following assumption:
\begin{quote}
\textbf{Compactness}: for any $x \in \mathbb{R}$, the intersection of $\{ s \in \mathbb{R}^n : u(s) \geq x\}$ and $\{ s \in \mathbb{R}^n : c(s) \leq 0\}$ is compact.
\end{quote}

Intuitively, a subset of Euclidean space is compact if you can't keep going in any direction without eventually leaving the set. For example, a closed disk (the area bounded by a circle including the boundary line) is compact. The Heine-Borel theorem says that a subset of Euclidean space is compact if and only if it is closed and bounded \parencite[p.~40]{Rudin1976}. So one way not to be compact is to be unbounded, like the region above the diagonal line where $x=y$. Another way not to be compact is to be bounded but topologically open, for example an open disk (the interior of a circle with the boundary line removed). 

Compactness says that for any $x \in \mathbb{R}$, the set of states which have utility at least $x$ and are feasible according to the cost function is compact. This means that you can't keep going in any direction without eventually leaving the set: either utility decreases below $x$ or you hit the feasibility constraint. Note that the definition of Compactness cares only about the cost function and ignores the lower bounds. A state might be feasible according to the cost function in the sense that $c(s) \leq 0$ but ruled out by the lower bounds, a detail which matters for the theorem below.

The example shown in Figure \ref{2dvalue} satisfies Compactness and also exhibits proxy failure, so there is hope that Compactness gives us a general story about when proxy failure happens. Figure \ref{not-compact} shows an example where Compactness fails. The feasible states according to the cost function are $\{ \langle x,y \rangle \in \mathbb{R}^2 : x + y \leq 5\}$ (ignoring lower bounds), true utility is $x + 2y$ and proxy utility is $y$. There is no proxy failure. No matter where we draw the lower bounds, we can maximize the proxy and true utility at the same time.

To see why Compactness fails, consider the set of states $s$ with $u(s) \geq 1$. This is the set of states above the diagonal line passing through $(0,0.5)$ and $(1,0)$ in the lower left corner of Figure \ref{not-compact}. We intersect this set with the set of feasible states according to $c$. The intersection of these sets is not compact because we can keep going to the upper left without bound. In particular, we can follow the dotted line where $x+2y = 4$.

\begin{figure}
\centering
\includegraphics[width=150pt]{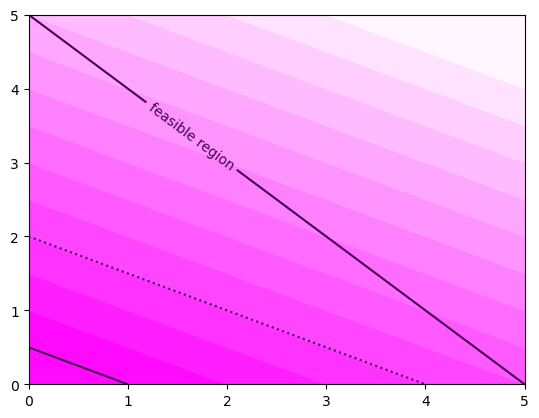}
\caption{No compactness.}\label{not-compact}
\end{figure}

Compactness is preserved by positive affine transformations:\footnote{Both here and below, assume we are holding the rest of the framework ($c$, $\mathcal{S}$, $\hat{u}$, $s^0$) fixed.}
\begin{theorem}
If $u$ satisfies Compactness, then any positive affine transformation of $u$ satisfies Compactness.
\end{theorem}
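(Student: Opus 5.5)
The plan is to show that a positive affine transformation leaves the family of sets appearing in Compactness unchanged, only relabelling which threshold picks out which set. Let $v = au + b$ with $a > 0$, and fix an arbitrary threshold $x \in \mathbb{R}$. We must show that $\{ s \in \mathbb{R}^n : v(s) \geq x\} \cap \{ s \in \mathbb{R}^n : c(s) \leq 0\}$ is compact.

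The key step is the set identity
\[
\{ s \in \mathbb{R}^n : v(s) \geq x\} = \{ s \in \mathbb{R}^n : u(s) \geq x'\}, \qquad x' = \frac{x - b}{a}.
\]
This follows because $au(s) + b \geq x$ is equivalent to $u(s) \geq (x-b)/a$. Dividing by $a$ preserves the direction of the inequality precisely because $a > 0$. This is the only place the positivity of $a$ is used, and it is where the argument would fail for a negative $a$: that would flip the superlevel sets into sublevel sets.

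Intersecting both sides with $\{ s : c(s) \leq 0\}$ shows that the set we must prove compact is literally the set that Compactness for $u$ asserts to be compact at threshold $x' \in \mathbb{R}$. Since $u$ satisfies Compactness for every real threshold, in particular for $x'$, the set is compact. As $x$ was arbitrary, $v$ satisfies Compactness.

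There is no real obstacle here. The only point needing care is that the cost function $c$ (and hence the feasibility set) is held fixed, as the paper stipulates, so that only the utility side of the intersection changes.
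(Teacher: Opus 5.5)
Your proof is correct and matches the paper's argument: since $a>0$, you rewrite $\{s : au(s)+b \geq x\}$ as $\{s : u(s) \geq (x-b)/a\}$, intersect with the fixed set $\{s : c(s) \leq 0\}$, and apply Compactness for $u$ at the real threshold $(x-b)/a$.
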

This tells us how much realism about utility is required by Compactness. Since Compactness is preserved under positive affine transformations, we don't have to think there is a unique real-valued utility function but only a family of utility functions up to positive affine transformations, which is a standard uniqueness condition \parencite{Ramsey1926, vonNeumann1944, Savage1972}. However, we will later see conditions requiring less uniqueness and correspondingly less realism about utility.

To formalize the idea of optimization turning bad from the point of view of true utility, \textcite{Zhuang2020} say that for any $x \in \mathbb{R}$, optimization is \emph{$x$-costly} if, for our optimization sequence $s^0, s^1, ...$, we have $\limsup\limits_{t \to \infty} u(s^t) \leq x$. This means that, as we maximize proxy utility, true utility eventually falls and stays below $x$. They show:
\begin{theorem}
Compactness holds if and only if, for any proxy $\hat{u}$ and any $x \in \mathbb{R}$, there is some unmentioned feature $k$ and some $r \in \mathbb{R}$ such that, if the lower bound of $k$ is below $r$, then optimization is $x$-costly.
\end{theorem}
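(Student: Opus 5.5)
We prove the two directions separately. Throughout we use the setup of \textcite{Zhuang2020}: $u$, $\hat{u}$ and $c$ are continuous and strictly increasing, and the optimization sequence $s^0, s^1, \dots$ is arbitrary apart from converging in proxy utility. We also rely on Theorem 1: any limit point of the sequence in $\mathcal{S}$ has all unmentioned features at their lower bounds.

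\emph{Forward direction.} Assume Compactness and fix $\hat{u}$ and $x$. Let $K = \{ s : u(s) \geq x, c(s) \leq 0\}$.
\begin{itemize}
\item Since $K$ is compact, every coordinate is bounded on $K$. So for an unmentioned feature $k$ there is some $r$ with $s_k > r$ for all $s \in K$. If $K$ is empty, any $r$ works.
\item Now suppose the lower bound satisfies $b_k < r$. Restricting by the lower bounds, $\mathcal{S}$ is compact, so the sequence $s^t$ has convergent subsequences.
\item Take a subsequence realizing $\limsup_t u(s^t)$ and pass to a further convergent subsequence with limit $s^*$. By continuity, $\hat{u}(s^*) = \sup_{\mathcal{S}} \hat{u}$.
\item The argument behind Theorem 1 then gives $s^*_k = b_k$. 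The proof of Theorem 1 only uses that $s^*$ maximizes $\hat{u}$ on $\mathcal{S}$: otherwise we could decrease $s_k$ slightly and increase a proxy feature while keeping $c \leq 0$.
\item Hence $s^*_k < r$, so $s^* \notin K$. Since $c(s^*) \leq 0$, this forces $u(s^*) < x$. By continuity, $\limsup_t u(s^t) = u(s^*) < x$, so optimization is $x$-costly.
\end{itemize}
One point needs care. When the lower bound $b_k$ is moved, the other lower bounds stay fixed, while $\mathcal{S}$ and possibly the sequence change. The choice of $r$ above depends only on $K$, so it is independent of the sequence.

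\emph{Converse direction.} Suppose Compactness fails for some $x$, so $K$ is not compact. Since $u$ and $c$ are continuous, $K$ is closed, so it must be unbounded. We then want a proxy $\hat{u}$ and a threshold $x'$ such that for every unmentioned feature $k$ and every $r$, some $b_k < r$ makes optimization not $x'$-costly.
\begin{itemize}
\item Monotonicity restricts the direction of unboundedness. Because $c$ is strictly increasing and $c \leq 0$ on $K$, all coordinates are bounded above along any sequence in $K$ that stays comparable. Because $u$ is strictly increasing and $u \geq x$ on $K$, the sequence cannot go to $-\infty$ in every coordinate.
\item So there is an unbounded sequence in $K$ in which some coordinate $j$ goes to $+\infty$ while others go to $-\infty$, as along the dotted line in Figure \ref{not-compact}.
\item Take $\hat{u}$ to depend on the feature $j$ that grows, as in that figure. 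The unmentioned features include some that can go to $-\infty$ while $u$ stays at least $x$.
\end{itemize}
For a given $k$ and $r$, we set $b_k$ very low and choose the remaining lower bounds, and ideally the starting point $s^0$, so that the proxy maximizer lies near the far end of $K$. Then $u$ at the maximizer is at least $x$, and the sequence can converge to it, which gives $\limsup u(s^t) \geq x$. Taking $x' < x$ makes the inequality strict.

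\emph{Main obstacle.} The hardest step is the converse, and specifically the quantifier bookkeeping. We need the failure of $x$-costliness to hold for \emph{every} unmentioned feature, and we must handle the other lower bounds, which are part of the fixed framework. The unbounded direction may send several features to $-\infty$ at once, so a single lowered bound $b_k$ may not reach it.

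My first attempt would be to choose $\hat{u}$ to depend on all coordinates except one coordinate $k$ that goes to $-\infty$ along the unbounded sequence. Then $k$ is the only unmentioned feature. Even so, the other bounds $b_i$ may cut off the escape, so the converse may only hold given a suitable reading of which parameters are held fixed. I would check this against Zhuang and Hadfield-Menell's original formulation before finalizing.
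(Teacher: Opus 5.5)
The paper does not prove this theorem itself; it is quoted from Zhuang and Hadfield-Menell and skipped in the appendix. So I assess your proposal on its own terms.

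\textbf{Forward direction.} This is correct and complete. Every limit point of an optimization sequence maximizes $\hat u$ on the compact set $\mathcal{S}$. By the argument of Theorem 1 it has $s_k=b_k<r$, so it lies outside $K$ and has $u<x$. Passing to subsequences is more careful than citing Theorem 1, which assumes the sequence converges. Here compactness of $\mathcal{S}$ even follows from Compactness, since monotonicity of $u$ gives $\mathcal{S}\subseteq\{u\ge u(b)\}\cap\{c\le 0\}$. Because your $r$ depends only on $K$, this direction holds uniformly in the other lower bounds.

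\textbf{Converse.} This is not proved. You stop at a plan, and the plan rests on a false step: an unbounded sequence in $K$ need not have any coordinate tending to $+\infty$. In the paper's sigmoid example, Compactness fails at every level $x\le 0$, yet $2^x+2^y\le 20$ bounds both coordinates above by $\log_2 20$. You also never justify that $u$ at the proxy maximizer is at least $x$.

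\textbf{The missing idea.} Move \emph{all} lower bounds to a far point of the unbounded sequence and use monotonicity of $u$. Let $s^m\in\{u\ge x_0\}\cap\{c\le 0\}$ with $|s^m|\to\infty$. Some coordinate $k_0$ is unbounded below along this sequence. Otherwise the sequence would lie in $\{c\le 0,\ s\ge L\}$ for some $L$, which the paper takes to be compact. Pass to a subsequence with $s^m_{k_0}\to-\infty$. Let $\hat u$ be the sum of the features other than $k_0$ (your ``first attempt''), so $k_0$ is the only unmentioned feature, and let $x=x_0-1$. Given $r$, pick $m$ with $s^m_{k_0}<r$ and set the whole lower-bound vector to $b=s^m$. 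Every feasible state then dominates $s^m$ componentwise, so $u\ge u(s^m)\ge x_0>x$ on all of $\mathcal{S}$. Hence no optimization sequence is $x$-costly.

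\textbf{Which reading to adopt.} This argument needs the reading on which all lower bounds are quantified over, which matches the paper's gloss (``we quantify over different choices of lower bounds''). Your worry about the other reading is justified: if only $b_k$ moves and the other bounds stay fixed, the converse is false. Take $n=3$, $c(s)=s_1+s_2+s_3$, $u(s)=e^{s_3}-e^{-s_3}-e^{-s_1}-e^{-2s_2}$, and fixed bounds $(-1,\tfrac12,-1)$.
\begin{itemize}
\item The points $(-t,-1,t+1)$ satisfy $c=0$ and $u=(e-1)e^{t}-e^{-t-1}-e^{2}\to\infty$, so Compactness fails at level $0$.
\item With the other bounds fixed, the maximum of $u$ over feasible states with $s_k=\beta$ is at most $e^{-\beta}(e^{-1/2}-1)$, $e^{1-\beta}-e^{-2\beta}$, and $e^{\beta}-e^{-\beta}$ for $k=1,2,3$ respectively.
\item All three bounds tend to $-\infty$. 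So for every proxy and every $x$, lowering the bound of any unmentioned feature far enough makes optimization $x$-costly, and the right-hand side holds.
\end{itemize}
You should therefore state and prove the converse explicitly under the all-bounds reading.
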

In this theorem, we keep $c$ and $u$ fixed and quantify over different choices of lower bounds. The intended interpretation is that proxy optimization eventually becomes arbitrarily costly in terms of true utility. They write that under the assumptions of the theorem, ``we can guarantee that as optimization progresses, eventually overoptimization occurs'' \parencite[p.~5]{Zhuang2020}. This seems disturbing: proxy optimization causes massive harm as all resources are directed away from unmentioned features.

However, there are two reason to worry about this story. First, the theorem does not say that proxy optimization actually leads to true utility decreasing. The theorem only says that proxy optimization leads to true utility decreasing if the lower bound of some unmentioned feature is sufficiently low. To see this, take another look at Figure \ref{2dvalue} introduced before. Compactness holds. Suppose we start at $\langle 2,2 \rangle$ with true utility 4 and maximize proxy utility $\hat{u}(\langle x,y \rangle) = x$. As we saw earlier, we end up at the lower right corner with true utility around 4.25, so we are better off than at our starting point. Proxy optimization does not lead true utility to decrease. The reason we don't keep going is that the $y$-feature can't be decreased any further. The lower bound rescues us. So the argument against proxy optimization based on the theorem is limited. The theorem only shows that proxy optimization would be bad if lower bounds were low but they might not be, so it does not explain proxy failure as depicted in Figure \ref{proxy-fail}. In one way, Compactness is too weak.

Second, Compactness rules out reasonable utility functions. In particular, Compactness rules out utility functions which are bounded below.\footnote{Proof: Assume $u$ is bounded below. Then for some $x \in \mathbb{R}$, $u(s) \geq x$ for all $s \in \mathbb{R}^n$. So $\{s \in \mathbb{R}^n: u(s) \geq x\} \cap \{s \in \mathbb{R}^n: c(s) \leq 0\} = \{s \in \mathbb{R}^n: c(s) \leq 0\}$ which is not compact since $c$ is strictly increasing.} This is a strong and implausible requirement. Surely, proxy failure can happen even if there is a limit to how bad things can get. Suppose we stick with our example but change true utility to be bounded below. A simple way to do so is the sigmoid function $\sigma(x+y) = \frac{1}{1 + e^{-(x+y)}}$. This utility function is bounded above and below so Compactness fails. However, we get a similar kind of proxy failure as before. Maximizing true utility brings us to approximately $\langle 3.32, 3.32 \rangle$ while proxy optimization pushes us to one of the corners. So Compactness does not get at the heart of proxy failure. Further, there are  reasons to reject unbounded utility functions because they lead to problems in decision making under uncertainty \parencite{Russell2021}. So in another way, Compactness is too strong.

\subsection{Balance}

Here is a better story, based on the following idea: in feasible states where true utility is maximized, not all unmentioned features are minimized. We can write this as follows:
\begin{quote}
\textbf{Minimal Balance}: for all $s \in \mathcal{S}$ which maximize true utility, there is some unmentioned feature not minimized in $s$.
\end{quote}
If Minimal Balance holds, then by Theorem \ref{thm1a}, any state which maximizes proxy utility does not maximize true utility. So, given Minimal Balance, minimizing all unmentioned features is bad in some sense: it does not lead to maximum true utility. This supports the second premise of our argument against proxy optimization. Like Compactness, Minimal Balance constrains the combination of true utility and feasible states. It also constrains the proxy since which features are unmentioned depends on the proxy. However, Minimal Balance is logically independent of Compactness.\footnote{To see that Compactness doesn't entail Minimal Balance, let $\mathcal{S} = \{ \langle x,y \rangle \in \mathbb{R}^2 : 2^{(x+3)} + 2^y \leq 20 \textrm{ and } x \geq 0 \textrm{ and } y \geq 0\}$ with $u(\langle x,y \rangle) = x + 2y$ and $x$ unmentioned. Compactness is satisfied but Minimal Balance fails since the feasible state maximizing $u$ minimizes $x$. For the other direction, consider the example of sigmoid utilities which violates Compactness but satisfies Minimal Balance since $\arg \max_{s \in \mathcal{S}} \sigma(x + y) \approx \langle 3.32, 3.32 \rangle$.}

Minimal Balance expresses the idea that some unmentioned features matter. Maximizing overall value requires at least some allocation of resources to some unmentioned features. This seems plausible given our limitations of measurement: we might not be able to measure all features which matter for overall value. Roughly, we can think of Minimal Balance as encoding some ideal of a `well-rounded world'.

Minimal Balance allows some tradeoffs. We can decrease unmentioned features and thereby increase true utility. But according to Minimal Balance, unmentioned features matter enough so that minimizing all of them can't be offset by increasing the features tracked by our proxy. So there are constraints on what tradeoffs we can make. For example, if we care about maximizing research success, perhaps it's okay to decrease significance per paper a little bit if we publish many more papers. But if we minimize significance, no amount of increase in publication volume can make up for that. The key idea is that if we keep making tradeoffs which decrease unmentioned features, these tradeoffs will eventually push us away from the global optimum.

We can give a heuristic justification for Minimal Balance in terms of bargaining. Imagine features as agents in a bargaining problem debating how to distribute fixed resources. Not all features can be maximized so we must make tradeoffs. The true utility function represents the optimal solution to the bargaining problem and the proxy represents the optimal solution for a proper subset of agents. Unmentioned features correspond to agents whose concerns are not represented by the proxy. In this context, Minimal Balance says that the optimal solution to our bargaining problem does not fail to give any resources to agents not represented. This is plausible and fits with axiomatic models of bargaining \parencite{Nash1950}.

Minimal Balance requires much less uniqueness of the true utility function than Compactness since it is invariant under any transformation which preserves global maxima: 
\begin{theorem}
If $u$ satisfies Minimal Balance, then any max-preserving transformation of $u$ satisfies Minimal Balance.
\end{theorem}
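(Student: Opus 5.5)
The plan is to unwind the definition of a max-preserving transformation and observe that Minimal Balance mentions the true utility function only through the set of its maximizers on $\mathcal{S}$. I will read a \emph{max-preserving transformation} of $u$ as any $v : \mathbb{R}^n \to \mathbb{R}$ with
\[
\arg\max_{s \in \mathcal{S}} v(s) = \arg\max_{s \in \mathcal{S}} u(s),
\]
holding $c$, $\mathcal{S}$, $\hat{u}$ and the lower bounds fixed, as in the footnote to the previous theorem. If the paper's official definition is instead ``$v = f \circ u$ for some $f$ that preserves global maxima'' (for example, any strictly increasing $f$), the first step is to check that this composition has the same set of maximizers on $\mathcal{S}$. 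For strictly increasing $f$ this is immediate: $u(s) \geq u(s')$ for all $s' \in \mathcal{S}$ iff $f(u(s)) \geq f(u(s'))$ for all $s'$.

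Next I note that whether a feature is unmentioned depends only on $\hat{u}$, and whether it is minimized at $s$ depends only on $s$ and its lower bound $b_i$. Neither depends on the true utility function. So Minimal Balance for $u$ has the form ``for all $s \in M_u$, $P(s)$,'' where $M_u$ is the set of feasible maximizers of $u$ and $P(s)$ says that some unmentioned feature is not minimized at $s$. Here $P$ is independent of $u$.

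The conclusion is then immediate. Take any $s \in \mathcal{S}$ that maximizes $v$. Since $M_v = M_u$, $s$ maximizes $u$, so by Minimal Balance for $u$, $P(s)$ holds. Hence $v$ satisfies Minimal Balance.

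The only real obstacle is definitional: making sure ``max-preserving'' delivers equality of the argmax sets on $\mathcal{S}$, not merely an inclusion $M_u \subseteq M_v$. The argument needs $M_v \subseteq M_u$. If the paper's definition only guarantees that maximizers of $u$ remain maximizers of $v$, the claim can fail: a constant transformation makes every feasible state a maximizer, including states where all unmentioned features are minimized. So I would fix the definition as preserving exactly the set of global maxima, and note that strictly increasing transformations, and in particular positive affine ones, satisfy it.
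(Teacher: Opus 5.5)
Your proof is correct and matches the paper's argument: the paper likewise defines a max-preserving transformation by $\arg\max_{s \in \mathcal{S}} v(s) = \arg\max_{s \in \mathcal{S}} u(s)$ and observes that any maximizer of $v$ is a maximizer of $u$, so Minimal Balance carries over. Your further remark is also correct: only the inclusion of the maximizers of $v$ in those of $u$ is used, and the reverse inclusion alone would not suffice, as your constant-transformation example shows.
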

The proof is in the appendix. This means that you can accept Minimal Balance without thinking that there is a true utility function unique up to positive affine transformations. You don't even have to think that true utility gives us an order of states since max-preserving transformations need not preserve the order. All that matters is that there is some utopian state where things are best. This seems like a modest requirement and so supports the plausibility of Minimal Balance.

However, achieving a utopian global optimum is a lot to ask. Perhaps what matters is increasing true utility relative to the status quo. Even if Minimal Balance holds, proxy optimization might increase true utility relative to our starting state, as in the example shown by Figure \ref{2dvalue} starting at $\langle 2,2 \rangle$. Minimal Balance holds but proxy optimization beats doing nothing.

Here is a plausible strengthening of Minimal Balance: in the status quo, features are somewhat balanced. Some unmentioned features which significantly contribute to value are not minimized. Further, if these features were minimized, things would be worse than they are. We can write this as:
\begin{quote}
\textbf{Actual Balance}: some unmentioned feature $k$ is not minimized in $s^0$ and for any $s \in \mathcal{S}$ where $k$ is minimized, $u(s^0) > u(s)$.
\end{quote}
Actual Balance entails Minimal Balance.\footnote{Proof: assume $s^* \in \mathcal{S}$ maximizes true utility, so $u(s^*) \geq u(s^0)$. By Actual Balance, $u(s^0) > u(s)$ for any $s$ where all unmentioned features are minimized. So $u(s^*) > u(s)$, which means that in $s^*$, some unmentioned feature is not minimized.} By Theorem \ref{thm1a}, Actual Balance implies that proxy optimization will eventually lead true utility to decrease relative to our starting state. If we keep making tradeoffs which decrease unmentioned features, this eventually pushes true utility below our starting point.

Like Minimal Balance, Actual Balance allows some tradeoffs. It does not say that decreasing unmentioned features always makes things worse. It just says that minimizing some unmentioned features makes things worse. This seems plausible if some unmentioned features significantly contribute to value.

For example, suppose that in the status quo, the significance of papers published by our department is not minimal. Perhaps we can increase research success while decreasing  significance per paper a little bit, but minimizing significance is  worse in terms of research success than the status quo, no matter how many papers we publish.

The idea is that the world is at least somewhat well-rounded. There might be many ways to improve things, but minimizing all features not included in our proxy will make things worse. This captures the idea that while the world might be far from optimal, there is some reason for the way things are and we should not radically change things if we can't measure everything that matters, which is sometimes illustrated with the story of Chesterton's Fence.

So Actual Balance provides a robust justification of the second premise of our argument against proxy optimization. It identifies a plausible constraint under which proxy optimization leads to worse outcomes than doing nothing. This fits with the picture of proxy failure in Figure \ref{proxy-fail}. One difference is that Actual Balance is compatible with true utility going down immediately once we start proxy optimization. However, this is a helpful refinement to our initial conception of proxy failure. Sometimes, optimization makes things worse immediately. Figure \ref{proxy-fail} depicts a situation in which the proxy is `locally valid' in the sense that in some neighborhood around the starting state, increasing the proxy also increases true utility. If we're smart about constructing a proxy, we can hope that our proxy is at least locally valid. But if we keep optimizing, the proxy might no longer be valid.

Actual Balance is invariant under (strict) monotonic transformations which preserve the ordering of states:
\begin{theorem}
If $u$ satisfies Actual Balance, then any strict monotonic transformation of $u$ satisfies Actual Balance.
\end{theorem}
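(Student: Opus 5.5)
The plan is to check the definition of Actual Balance clause by clause and observe that each clause depends on $u$ only through the ordering it induces on states. Fix a strictly increasing function $f : \mathbb{R} \to \mathbb{R}$ (this is what I take ``strict monotonic transformation'' to mean, as the surrounding text speaks of transformations preserving the ordering of states). Set $v = f \circ u$. Following the paper's convention, $c$, $\mathcal{S}$, $\hat{u}$ and $s^0$ stay fixed. The goal is to show that $v$ satisfies Actual Balance whenever $u$ does.

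Actual Balance for $u$ consists of two clauses. The first is that some unmentioned feature $k$ is not minimized in $s^0$. The second is that for every $s \in \mathcal{S}$ with $s_k = b_k$ we have $u(s^0) > u(s)$.

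The first clause does not mention $u$ at all. Which features are unmentioned is fixed by the proxy $\hat{u}$. Whether $k$ is minimized in $s^0$ depends only on $s^0$ and the lower bound $b_k$. So the same feature $k$ witnesses the first clause for $v$.

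For the second clause, take any $s \in \mathcal{S}$ where $k$ is minimized. Actual Balance for $u$ gives $u(s^0) > u(s)$. Since $f$ is strictly increasing, $f(u(s^0)) > f(u(s))$, that is, $v(s^0) > v(s)$. This is exactly the second clause for $v$, with the same $k$. So $v$ satisfies Actual Balance.

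There is no real obstacle here. The only point needing care is fixing the meaning of ``strict monotonic transformation''. If it were read to include strictly decreasing $f$, the inequality would reverse and the claim would fail. For example, $f(x) = -x$ turns $u(s^0) > u(s)$ into $v(s^0) < v(s)$. So I would state explicitly that $f$ is strictly increasing, matching the paper's phrase ``preserve the ordering of states''. I would also note in passing that $f$ need not be continuous for this argument. If the framework's standing assumptions require $v$ to be continuous and strictly increasing in each feature, those properties are inherited under continuous strictly increasing $f$, but Actual Balance itself uses none of them.
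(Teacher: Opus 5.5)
Your proposal is correct and follows the paper's proof: the first clause of Actual Balance does not involve $u$, and the second transfers because strict inequalities $u(s^0) > u(s)$ are preserved. The paper defines a strict monotonic transformation directly as any $v$ with $u(x) > u(y)$ implying $v(x) > v(y)$, not necessarily of the form $f \circ u$. Your argument uses only this property, so it covers that slightly broader class unchanged.
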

So Actual Balance requires a uniqueness commitment between Minimal Balance and Compactness. You don't have to think that true utility is unique up to positive affine transformations but only that there is some order of states in terms of how good they are. (The order need not be complete.) This is much weaker than the uniqueness requirement of Compactness and so supports the plausibility of Actual Balance. Compactness is not invariant under monotonic transformations since we can monotonically transform an unbounded utility function which satisfies Compactness to a bounded utility function which does not. An example illustrating the basic idea is shown in Figure \ref{sigmoid}. We have feature $x$ with unbounded utility function $u(x) = x$. The sigmoid function $\sigma(x) = \frac{1}{1 + e^{-x}}$ is a monotonic transformation of $u$ but bounded above and below.

\begin{figure}
\centering
\includegraphics[width=150pt]{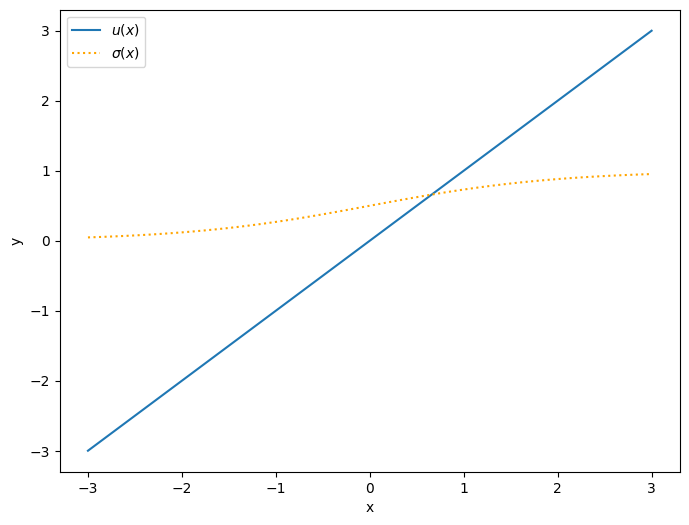}
\caption{Monotonic transformation.}\label{sigmoid}
\end{figure}

Take a step back. We discussed how to justify the idea that minimizing all unmentioned features is bad. We found two plausible principles which support this idea. Minimal Balance says that minimizing all unmentioned features is bad in the sense of not maximizing true utility. Actual Balance says that minimizing all unmentioned features is bad in the sense of decreasing true utility relative to our starting point. Both principles are motivated by the idea that unmentioned features contribute significantly to value. So we have the outlines of a plausible story about why proxy optimization is bad.

The model of proxy optimization studied here could be refined in many ways. As it stands, the model is itself only a proxy for the phenomenon we are interested in and many important factors are not part of the model, such as different kinds of proxies, uncertainty and strategic interaction. Most importantly, the results are about what happens in the long run. It would be desirable to characterize proxy failure in the short run. As mentioned, this requires more restrictive assumptions on the optimization process. Nonetheless, it seems foolish to hope that we're all dead before we see proxy failure. So long-run results are an important first step to investigate proxy failure even if they don't answer all the questions we care about.

\subsection{Proxy Loss}

A key motivation for proxies is that features relevant to true utility might be costly to measure. This is, for example, why we use GDP as proxy for welfare. It might also be costly for other reasons to find and maximize the true utility function, for example because it's hard to find out what really matters. But if proxy optimization is bad, it might be better to pay these costs and maximize true utility. This depends on how bad proxy optimization is. 

If maximizing true utility has cost $c > 0$, the overall value of maximizing true utility while paying the cost is $max_{s \in S} u(s) - c$. We can compare the overall value of maximizing true utility while paying the cost to the value of maximizing the proxy $\hat{u}$. Assume for now that there is a unique $s^* \in \mathcal{S}$ which maximizes the proxy. Also assume that true utility and cost are unique up to positive affine transformations.

We can introduce notions of proxy loss which measure how much utility we lose by proxy optimization. Define \emph{ideal proxy loss} $\ell_1$ as $max_{s \in S} u(s) - u(s^*)$. This measures how much utility we lose by maximizing the proxy instead of the true utility function. Minimal Balance ensures that ideal proxy loss is positive. Ideal proxy loss is analogous to the cost of measurement above, so maximizing true utility while paying the cost is better than proxy optimization iff $c \leq \ell_1$. 

Define \emph{actual proxy loss} $\ell_2$ as $u(s^0) - u(s^*)$. This measures how much utility we lose by maximizing the proxy instead of sticking with the status quo. Actual Balance ensures that actual proxy loss is positive. It follows from the definitions that $\ell_1 \geq \ell_2$. If $\ell_1$ is positive but $\ell_2$ is negative, this means that while we lose utility by maximizing the proxy instead of true utility, we still improve our situation relative to the status quo.

In general, there is no guarantee that there is a unique maximizer of proxy utility. Without uniquness, things get more complicated. We can do a worst-case analysis by looking at the maximizer of proxy utility with the lowest true utility. We can also introduce probabilities over optimization sequences. I leave these refinements for another time.

\section{A Dilemma for Optimization}

We have the mathematical framework of decision theory and optimization but how do we apply this framework to a real-world problem? Decision theory advises us to maximize some utility function but what utility function do we plug in? We face a dilemma. Either we use the true utility function or we use a proxy. In many cases, we can't use the true utility function because it is completely intractable. So we have to use some proxy. But as we've seen, under plausible assumptions, maximizing proxy utility is eventually bad from the perspective of true utility.

One way to frame this dilemma is in terms of broad and narrow utility functions. Broad utility functions are sensitive to many features of the world. They are fine-grained and complicated and for this very reason can plausibly be thought to capture the true complexity of our values. While broad utility functions are plausible as measures of value, we typically can't explicitly write them down and use them to make decisions. In contrast, narrow utility functions are sensitive to only a few features of the world, like money. We can explicitly write them down but they are less plausible as measures of value, at least for most of us. The dilemma is that when applying decision theory, both broad and narrow utility functions are problematic. Broad utility functions are problematic because we can't write them down and use them. Narrow utility functions are problematic because they are subject to proxy failure. So how can we ever apply decision theory?\footnote{We can get another angle on this problem by recalling an old debate on induction. Carnap and Hempel developed formal systems of inductive reasoning. \textcite{Goodman1946} objects that there is a problem with applying this `ingenious and valuable logico-mathematical apparatus' because we have to decide which predicates are primitive. Some choices lead to a reasonable model of inductive reasoning while others, like grue, do not. \textcite{Carnap1947} agrees and suggests we need another theory which tell us how to apply inductive logic. Analogously, we have the ingenious mathematical apparatus of decision theory and optimization that we can apply to the correct value function whenever we discover what the correct value function is. Sadly, it seems like neither broad nor narrow utility functions will do the job.}

Note that, in our framework, proxy failure happens in the long run. So the argument is compatible with proxy optimization improving utility in the short run. This is a good thing because we sometimes use proxy optimization to make things better. What the argument shows is that, under plausible assumptions, proxy optimization will eventually turn bad. So when doing optimization, we should be careful to notice when proxies are failing. In general, knowing when proxies are failing is a difficult problem. 

The situation seems even worse when it comes to AI systems which are trained to maximize some proxy. There are two reasons to think that proxy failure is of special relevance to AI. First, AI systems are often trained using large amounts of optimization pressure, making proxy failure more likely. The amount of optimization pressure will likely increase if current trends of compute scaling continue. Second, the scale of deployment of AI systems makes proxy failure potentially more harmful. If an AI system is deployed at a planetary scale, proxy failure matters much more than for a single human agent.

\section{Mitigation strategies}

I discuss some responses to proxy failure. All of them face serious problems, so the upshot is that the problem of proxy failure is quite robust.

\subsection{Obey the axioms}

Perhaps it's a mistake to think that we apply decision theory by specifying a utility function and then maximizing this  function. Instead, we start with preferences. Decision theory tells us that our preferences should satisfy certain axioms, such as transitivity: if you prefer $a$ to $b$ and $b$ to $c$, you should also prefer $a$ to $c$. Then we invoke representation theorems which say that preferences satisfy the axioms if and only if they are representable as maximizing (expected) utility with respect to some utility function.\footnote{Some representation theorems presuppose probability \parencite{vonNeumann1944}, others derive probability from preferences \parencite{Ramsey1926, Savage1972, Jeffrey1983}. There are also representation theorems for utility without uncertainty \parencite{Kreps1988, Kobberling2006}. \textcite{Krantz1971} discuss many relevant results in measurement theory.} The key point is that if you satisfy the axioms, you get the utility function `for free'. This way of thinking about decision theory looks like it allows us to sidestep constructing a proxy. Just obey the axioms and you're guaranteed to maximize your true utility function even if we have no way to write down this utility function.

However, there is a problem. This mitigation strategy provides extremely thin normative guidance. Obeying the axioms ensures that you maximize \emph{some} utility function with no restrictions on what this utility function looks like. This means that you are coherent in some way, but we want more than coherence, especially when setting proxies for other agents. To make this vivid, imagine we know that an AI system obeys the right axioms and so maximizes (expected) utility with respect to some utility function. If we're concerned about whether this AI system is safe, should this reassure us? Not at all. The AI system could still kill everyone. More broadly, it's not clear what is so great about coherence if your aims are not substantively good \parencite{Kolodny2005}. 

You might respond that the AI system should obey the axioms and learn our preferences. But this seems no easier than constructing a proxy. The lesson of the representation theorems is that learning the preferences of an agent and learning their utility function are equivalent under appropriate assumptions. This brings out how the mitigation strategy does not really avoid the problem of proxy failure. If you maximize some proxy, you maximize some utility function and so you obey the axioms. However, the proxy could diverge arbitrarily from what we care about. Obeying the axioms is not enough.

\subsection{Change the proxy}

We've seen that under plausible conditions, maximizing any fixed proxy is bad in the long run. Perhaps we should change the proxy over time. The rough idea is that you should change the proxy if you start overoptimizing, then maximize the new proxy for a while until you start overoptimizing again, and so on. By chaining together locally valid proxies in this manner, we can seemingly avoid the perils of proxy failure. A similar idea is discussed by \textcite{Nguyen2023} in an informal setting: we have to keep updating our heuristics to avoid manipulation in a hostile epistemic environment. \textcite{Wilson2006} discusses related issues about approximation strategies in applied mathematics.

\textcite{Zhuang2020} make this idea precise. Define the \emph{sensitivity} of feature $i$ in state $s$ as measuring how much true utility changes as a result of a (normalized) change in the feature.\footnote{They define the sensitivity of feature $i$ in $s$ in terms of partial derivatives: $\frac{\partial u}{\partial s_i}(\frac{\partial c}{\partial s_i})^{-1}$.} Then construct proxy $\hat{u}$ defined on the features with the largest and smallest sensitivity. They show that if the proxy always tracks the features with largest and smallest sensitivity and we keep other features constant, we will converge to a feasible state where true utility is maximized, assuming Compactness. The intuitive idea is that we only make tradeoffs between features which are not important (small sensitivity) and features which are important (large sensitivity) so we avoid the pathological behavior of proxy optimization.

The problem with this mitigation strategy is that it relies on strong assumptions. First, since in different states different features need to be tracked by our proxy, it assumes that in principle we can measure each feature. But some features relevant to true utility might be difficult or impossible to measure. Indeed, limitations of measurement are the key reason why we are talking about proxies in the first place. If we can measure all features, it's not clear why we should use proxies at all. 

Further, the strategy assumes that we can measure the sensitivity of each feature in each state, which requires us to measure true utility. Again, the reason we are talking about proxies in the first place is that true utility is often hard or impossible to measure. The result also requires that all unmentioned features are kept constant during proxy optimization. Since we might not be able measure these features, this seems difficult or impossible to do. Finally, the result relies on Compactness. As mentioned above, Compactness has the problematic consequence of ruling out bounded utility functions.

So the second mitigation strategy also faces serious problems. However, the broad lesson that we can sometimes avoid proxy failure by changing our proxy over time seems correct. We can also use a set of proxies and make sure we are doing well on all of them, which is a technique used in machine learning \parencite{Coste2023}. Still, even a set of proxies can miss out on features which matter if we can't measure the relevant features, so this strategy does not completely solve the problem of proxy failure. 

\subsection{Chill optimization}

Here is a variant of the second mitigation strategy. We construct the best proxy we can come up with. Then we start proxy optimization. We can't directly measure true utility (otherwise why use proxies in the first place?) and so we can't measure the sensitivity of each feature. However, assume that we know whether true utility is going up or down. We have an overall sense of things getting better or worse even if we can't say exactly how much. If true utility starts going down, stop proxy optimization. Then, see if we can construct a better proxy and repeat.

This strategy is not guaranteed to converge to a global optimum. It's not even guaranteed to increase true utility relative to our starting point since true utility might go down immediately. However, this strategy avoids proxy failure since we stop optimizing if things are getting worse. 

So we get weaker performance than with the kind of proxy change discussed by \textcite{Zhuang2020}. However, we also make weaker assumptions: we might not be able measure all features or true utility. All we know is whether true utility is going up or down. We might not be able to construct the optimal proxy, perhaps because the features with largest and smallest sensitivity can't be measured. Instead, we simply try to do the best we can given our limitations of measurement. \textcite{Karwowski2024} suggest a similar strategy of `early stopping' to avoid overoptimization in reinforcement learning.

This strategy seems like a plausible remedy for proxy failure in the context of public policy: if we notice things are getting worse, just stop optimizing. There might still be problems since we might have legal or institutional reasons to continue optimizing even if proxies are failing, but these problems are outside the scope of our framework. 

It also seems plausible for AI alignment. In this context, the present strategy says to let an AI system maximize some proxy until we notice things are getting worse, then switch it off. This assumes that AI systems will let themselves be switched off, which might fail \parencite{Hadfield2017, Neth2025}. There is some evidence for shutdown resistance in current LLMs, although it's unclear how much of this should be understood as role-play \parencite{Lynch2025, Mitchell2025}. Even so, the problem of ensuring AI systems will let themselves be switched off is arguably more tractable than the problem of constructing an optimal proxy for each state. 

However, the strategy assumes that we know whether true utility is going up or down. This is hard to know in general. If utility is complex and depends on factors that we can't measure, it might be difficult to say whether true utility is going up or down. Also, the effects of proxy optimization on true utility might be delayed. So knowing when proxies are failing is a difficult problem, not just for AI systems but also for humans.

Even if we can tell whether true utility is going down, there is the further question of why it is going down. In the model, we assume that all changes are due to the optimization process. But maybe true utility is going down for other reasons. Perhaps proxy optimization is making a positive contribution but adversarial agents are making things worse overall. Or perhaps  completely unrelated events are reducing true utility, like natural disasters. Such factors are currently not part of the model. They also present a difficult causal attribution problem: how do we tell what causes changes in true utility? I leave the problem of including such factors for another occasion. 

If changes are due to other causes, this might prevent proxy optimization from minimizing all unmentioned features. Lack of control might prevent proxy failure. However, it seems unwise to trust that the world will always be this kind. Changes unrelated to optimization might also push us outside of regimes where proxies are locally valid. It would require a kind of pre-established harmony for such changes to always work in our favor. Overall, this makes the problem of spotting proxy failure even more difficult.

\section{Consequences}

What follows? First, we can give a decision-theoretic explanation of why what \textcite{Nguyen2024} calls value capture is often bad: because of proxy failure. We have identified plausible conditions under which proxy optimization eventually leads to proxy failure and decreases true utility. We can think of this as a decision-theoretic vindication of the idea that there is something bad about value capture. The model also provides additional insights, for example by making the notion of a proxy precise and highlighting the role of optimization. Using proxies might be okay if you don't optimize too hard. 

The model leads to a dilemma for optimization. True utility often can't be measured because we can't measure all features relevant to value. But maximizing proxies often leads to proxy failure. This poses problems for any attempt to use maximization of some function as a decision procedure. If we think of consequentialist decision procedures as doing so, we have an argument that consequentialist decision procedures are often self-defeating in the long run. This applies to attempts to use consequentialist decision-procedures in the real world, such as effective altruism and cost-benefit analysis. The metrics used by proponents of effective altruism and cost-benefit analysis are best understood as proxies and so subject to proxy failure. These problems might be mitigated by changing the proxy or stopping optimization once things are getting worse, but this requires departures from maximizing any fixed proxy.

Proxy failure provides a useful lens for reframing problems of AI safety. This applies both to current AI systems \parencite{Creel2022,Thomas2022} and potentially more powerful future AI systems \parencite{Dung2023}. In particular, proxy failure is a useful lens for understanding failure modes of reinforcement learning from human feedback (RLHF) applied to LLMs: models become sycophantic and deceptive because that is what users like \parencite{Pan2024, Williams2025}. Further, LLMs reward hack in coding and problem solving and obfuscate doing so \parencite{Chen2025}.

Two more consequences. The framework suggests a critique of policies focused exclusively on increasing GDP, which is best understood as proxy for what matters and so subject to proxy failure. In some cases, a plurality of indicators might help \parencite[pp.~23-29]{Piketty2022}. However, as discussed above, a set of proxies does not fully solve the problem of proxy failure.

Finally, there are upshots for debates about AI capabilities. Here is a familiar situation. AI systems manage to beat some benchmark for intelligence such as playing chess or creating memes. Skeptics respond by saying that this is not what we really mean by `intelligence'. The skeptics are accused of moving the goalpost. The framework sketched here suggests that moving the goalpost is often reasonable if understood as updating an imperfect proxy. Benchmarks are best understood as proxies for intelligence and when we realize that AI systems can pass the benchmark without general intelligence, we realize the ways in which this proxy is imperfect. This is arguably the right way to understand the Turing test. It's not a definition of intelligence but a proxy and once we realize ways in which it's not a good proxy we should update \parencite{Mitchell2024}. Moving the goalpost seems like right approach when evaluating complex and multidimensional concepts.

\section*{Appendix}

We can apply the model to choice under uncertainty. Think of each $s \in \mathcal{S}$ as an uncertain prospect where $s_i$ is the expected value of feature $i$ under some probability function. We can replace expected values by risk-weighted expected values or similar. All the results go through under this interpretation. One limitation is that under this interpretation, true utility is a function of the vector of expected values of all features. You might also care about correlations among features. However, this still shows that the model can be applied to choice under uncertainty.

\textcite{Zhuang2020} show
\begin{theorema}\label{thm1a}
If the optimization sequence $s^0, s^1,....$ converges to some $s \in \mathcal{S}$, then all unmentioned features are minimized at $s$.
\end{theorema}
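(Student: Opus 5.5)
The plan is to argue by contradiction, using only continuity and strict monotonicity of $\hat{u}$ and $c$. Let $s = \lim_{t\to\infty} s^t$. Since $\mathcal{S}$ is closed, $s \in \mathcal{S}$. By continuity of $\hat{u}$ and the definition of an optimization sequence,
\[
\hat{u}(s) = \lim_{t\to\infty}\hat{u}(s^t) = \sup_{s' \in \mathcal{S}} \hat{u}(s').
\]
So $s$ is a feasible maximizer of proxy utility, and it suffices to show that every feasible maximizer of $\hat{u}$ has all unmentioned features at their lower bounds.

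Suppose instead that some unmentioned feature $j$ has $s_j > b_j$. Fix any mentioned feature $i$. I will build a feasible $s'$ with $\hat{u}(s') > \hat{u}(s)$.

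First lower feature $j$ to some value $s_j - \delta$, with $0<\delta \le s_j - b_j$, keeping all other coordinates fixed. Call this state $r$. Because $c$ is strictly increasing in feature $j$, we get $c(r) < c(s) \le 0$, and $r$ still respects every lower bound. Now raise feature $i$ by $\epsilon > 0$ to obtain $s'$. Since $c$ is continuous and $c(r) < 0$, a small enough $\epsilon$ keeps $c(s') \le 0$. Raising a coordinate cannot violate a lower bound, so $s' \in \mathcal{S}$.

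Because $\hat{u}$ does not depend on feature $j$ and is strictly increasing in feature $i$,
\[
\hat{u}(s') > \hat{u}(r) = \hat{u}(s),
\]
which contradicts the maximality of $s$. Hence every unmentioned feature satisfies $s_j = b_j$.

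The only delicate step is keeping the perturbed state feasible. Strict monotonicity of $c$ is what turns the decrease in feature $j$ into strict slack below $0$. Continuity of $c$ then absorbs a small increase in feature $i$. A weakly increasing $c$ would not give this slack and the argument would fail, so I will state this step carefully.

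I also note two things the argument does not need. It never uses any property of the true utility $u$, which matches the remark that the theorem covers correlated proxies. It also uses no compactness of $\mathcal{S}$, only that $\mathcal{S}$ is closed, which holds because $c$ is continuous and the lower-bound constraints are closed. Finally, the argument needs at least one mentioned feature, which holds whenever $k \ge 1$.
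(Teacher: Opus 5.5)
Your proof is correct and follows essentially the same route as the paper's: lower the unmentioned feature to get strict slack in the cost constraint (strict monotonicity of $c$), use continuity of $c$ to raise a mentioned feature while staying feasible, and contradict maximality of $s$ via strict monotonicity of $\hat{u}$. The paper leaves implicit the step that the limit $s$ is itself a proxy maximizer, which you derive explicitly from continuity of $\hat{u}$, so your version is, if anything, slightly more complete.
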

\noindent I reproduce the proof for the sake of completeness. 
\begin{proof}
Assume the optimization sequence $s^0, s^1,....$ converges to some $s \in \mathcal{S}$.  Suppose for \emph{reductio} that there is some unmentioned feature $k$ with lower bound $b_k$ and $s_k > b_k$. We show that we can increase proxy utility by decreasing the unmentioned feature $k$ and increasing some mentioned feature instead.

Since $s \in \mathcal{S}$, we have $c(s) \leq 0$. The cost function $c: \mathbb{R}^n \to \mathbb{R}$ is strictly increasing in each feature so for any $x, y \in \mathbb{R}^n$ with $x_i > y_i$ for some feature $i$ and $x_j = y_j$ for all other features $j$, we have $c(x) > c(y)$. Since $c$ is strictly increasing in each feature, $c(s - \epsilon \mathbf{e}_k) < 0$ for every $\epsilon > 0$, where $\mathbf{e}_k$ is the standard basis vector in dimension $k$. This is the vector that has a one as $k$-th entry and zeros everywhere else and `$-$' denotes vector subtraction.

Since $s_k > b_k$, we can choose some $\epsilon > 0$ such that $s_k - \epsilon > b_k$, so $s - \epsilon \mathbf{e}_k$ is feasible both according to $c$ and according to the lower bound. Since feature $k$ is unmentioned, $\hat{u}(s - \epsilon \mathbf{e}_k) = \hat{u}(s)$.

Since $c$ is continuous, we can choose some $\delta > 0$ and some mentioned feature $j$ such that $c(s - \epsilon \mathbf{e}_k + \delta \mathbf{e}_j ) \leq 0$, so $s^\prime= s - \epsilon \mathbf{e}_k + \delta \mathbf{e}_j$ is feasible, where `$+$' denotes vector addition. Proxy utility is strictly increasing in each mentioned feature, so $\hat{u}(s^\prime) > \hat{u}(s)$. We have shown that $s^\prime$ is feasible and has higher proxy utility than $s$, so $s$ does not maximize proxy utility among the feasible states, so the optimization sequence does not converge to $s$.
\end{proof}

This proof requires proxy utility to be strictly increasing in each feature it is defined on but not true utility. This means we can capture correlated proxies. For example, suppose there are two features $x$ and $y$ with true utility equal to $x$ and proxy utility equal to $y$. Proxy utility is strictly increasing in the feature $y$ it is defined on. The theorem applies even though true utility is not strictly increasing in $y$. To capture the belief that $x$ and $y$ are correlated, we need to add a probability measure which I won't explore in detail. However, correlations are fragile. Features might be correlated under your prior but correlations might vanish if we apply optimization pressure, a point noted in discussions of Goodhart's law \parencite{Manheim2018}. Minimization of unmentioned features provides an explanation of why correlations disappear under optimization pressure. A probability measure would also allow us to model optimization as a stochastic process, allowing us to impose constraints on the stochastic drift of proxy optimization.

\begin{theorema}
If $u$ satisfies Compactness, then any positive affine transformation of $u$ satisfies Compactness.
\end{theorema}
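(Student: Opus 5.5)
The plan is to show that the family of superlevel sets of $v = au + b$ (with $a > 0$) is exactly the family of superlevel sets of $u$, only reindexed. Compactness then transfers immediately, because it quantifies over every threshold $x \in \mathbb{R}$ and the feasibility set $\{s \in \mathbb{R}^n : c(s) \leq 0\}$ is unchanged (we hold $c$ fixed, as stipulated in the footnote).

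First I fix an arbitrary $x \in \mathbb{R}$ and compute
\[
\{ s \in \mathbb{R}^n : v(s) \geq x \} = \{ s \in \mathbb{R}^n : a u(s) + b \geq x \} = \{ s \in \mathbb{R}^n : u(s) \geq (x - b)/a \}.
\]
The second equality holds because $a > 0$: subtracting $b$ and dividing by the positive number $a$ preserves the direction of the inequality. This is the only place where positivity of $a$ is used. If $a$ were negative the inequality would flip, and we would get a sublevel set of $u$ instead. I would note this to show the hypothesis is needed.

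Next I set $x' = (x-b)/a \in \mathbb{R}$. The intersection of $\{ s : v(s) \geq x\}$ with $\{ s : c(s) \leq 0 \}$ is then literally the set $\{ s : u(s) \geq x'\} \cap \{ s : c(s) \leq 0\}$. Compactness of $u$ applied at the threshold $x'$ says this set is compact. Since $x$ was arbitrary, $v$ satisfies Compactness.

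I don't expect a genuine obstacle here. The one point to be careful about is making clear that the map $x \mapsto (x-b)/a$ sends real numbers to real numbers, so every threshold needed for $v$ is covered by the hypothesis on $u$. No topology beyond the stated definition is required, so there is no need to invoke Heine--Borel.
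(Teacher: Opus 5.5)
Your proof is correct and follows the paper's argument essentially step for step. Like the paper, you rewrite $\{s : au(s)+b \geq x\}$ as $\{s : u(s) \geq (x-b)/a\}$ using $a>0$, then apply Compactness of $u$ at the real threshold $(x-b)/a$, with $c$ held fixed.
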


\begin{proof}
Assume $u$ satisfies Compactness. Let $v = au + b$ with real numbers $a > 0$ and $b$. Consider an arbitrary $x \in \mathbb{R}$. We have $\{ s \in \mathbb{R}^n : v(s) \geq x\} = \{ s \in \mathbb{R}^n : au(s) + b \geq x\} =  \{ s \in \mathbb{R}^n : u(s) \geq \frac{x - b}{a}\}$ since $a > 0$. Since $ \frac{x - b}{a} \in \mathbb{R}$, by assumption $\{ s \in \mathbb{R}^n : u(s) \geq \frac{x - b}{a} \} \cap \{ s \in \mathbb{R}^n : c(s) \leq 0\}$ is compact.
\end{proof}

\stepcounter{theorema} 

\begin{theorema}
If $u$ satisfies Minimal Balance, then any max-preserving transformation of $u$ satisfies Minimal Balance.
\end{theorema}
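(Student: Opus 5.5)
The paper never formally defines ``max-preserving transformation,'' so I will first fix a definition matching the informal gloss: any function $v : \mathbb{R}^n \to \mathbb{R}$, holding $c$, $\mathcal{S}$, $\hat{u}$ and $s^0$ fixed, such that
\[
\arg\max_{s \in \mathcal{S}} v(s) = \arg\max_{s \in \mathcal{S}} u(s).
\]
So the set of feasible states maximizing $v$ is exactly the set of feasible states maximizing $u$. The paper stresses that such transformations need not preserve the order of states, and this definition respects that.

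With the definition fixed, the proof is a direct unfolding. Assume $u$ satisfies Minimal Balance and let $v$ be a max-preserving transformation of $u$. Take any $s \in \mathcal{S}$ that maximizes $v$. Since the argmax sets coincide, $s$ also maximizes $u$ over $\mathcal{S}$.

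Minimal Balance for $u$ then yields some unmentioned feature not minimized at $s$. Whether a feature is unmentioned depends only on $\hat{u}$, and whether it is minimized depends only on $s$ and the lower bounds $b_i$. Neither depends on the utility function, so the same feature witnesses Minimal Balance for $v$ at $s$. Since $s$ was an arbitrary $v$-maximizer, $v$ satisfies Minimal Balance.

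The only real obstacle is definitional. If ``max-preserving'' were read more weakly, as only the inclusion $\arg\max v \subseteq \arg\max u$, the argument above still goes through unchanged, since it only uses that every $v$-maximizer is a $u$-maximizer. If it were read in the other direction, as $\arg\max u \subseteq \arg\max v$, the claim could fail: $v$ might have extra maximizers at which all unmentioned features are minimized. I would therefore state the definition explicitly as either equality of the argmax sets or the inclusion $\arg\max v \subseteq \arg\max u$ before giving the argument.

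One edge case needs no separate treatment. If no maximizer exists, Minimal Balance holds vacuously for both $u$ and $v$. In any case, compactness of $\mathcal{S}$ and the extreme value theorem (see the footnote to the definition of an optimization sequence) guarantee existence for continuous $u$.
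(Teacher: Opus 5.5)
Your proof is correct and matches the paper's own argument. The paper also defines a max-preserving transformation by $\arg\max_{s \in \mathcal{S}} v(s) = \arg\max_{s \in \mathcal{S}} u(s)$, notes that any $v$-maximizer is therefore a $u$-maximizer, and lets Minimal Balance for $u$ supply the unmentioned feature that is not minimized. You are also right that only the inclusion $\arg\max v \subseteq \arg\max u$ is used, which slightly sharpens the paper's formulation.
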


\begin{proof}
Assume $u$ satisfies Minimal Balance. Let $v$ be a max-preserving transformation of $u$ so $\arg \max_{s \in \mathcal{S}} v(s) =  \arg \max_{s \in \mathcal{S}} u(s)$. Assume $s \in \arg \max_{s \in \mathcal{S}} v(s)$. Since $v$ is a max-preserving transformation of $u$, $s \in \arg \max_{s \in \mathcal{S}} u(s)$. By assumption, some unmentioned feature is not minimized in $s$, so $v$ satisfies Minimal Balance.
\end{proof}

\begin{theorema}
If $u$ satisfies Actual Balance, then any strict monotonic transformation of $u$ satisfies Actual Balance.
\end{theorema}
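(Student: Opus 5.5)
The plan is to unpack what a strict monotonic transformation is and check the two clauses of Actual Balance directly. By a strict monotonic transformation of $u$ we mean $v = f \circ u$, where $f : \mathbb{R} \to \mathbb{R}$ is strictly increasing, so $f(a) > f(b)$ whenever $a > b$. Equivalently, $v$ is any function with $v(s) > v(s')$ exactly when $u(s) > u(s')$. Only the forward direction of this equivalence will be needed. As in the footnote to the earlier invariance results, $c$, $\mathcal{S}$, $\hat{u}$ and $s^0$ are held fixed. This means the set of unmentioned features and the lower bounds $b_i$ are unchanged by passing from $u$ to $v$.

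Assume $u$ satisfies Actual Balance. Then there is an unmentioned feature $k$ such that $s^0_k > b_k$ and, for every $s \in \mathcal{S}$ with $s_k = b_k$, we have $u(s^0) > u(s)$. We use the same witness $k$ for $v$.

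\begin{enumerate}
\item The first clause of Actual Balance, that $k$ is unmentioned and not minimized in $s^0$, mentions neither $u$ nor $v$. It therefore transfers immediately.
\item For the second clause, take any $s \in \mathcal{S}$ with $s_k = b_k$. By hypothesis $u(s^0) > u(s)$. Applying the strictly increasing $f$ gives $v(s^0) = f(u(s^0)) > f(u(s)) = v(s)$.
\end{enumerate}

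This establishes Actual Balance for $v$.

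There is no real obstacle here; the argument mirrors the proofs of the two preceding invariance results. The only point requiring care is that strictness of $f$ is essential. A merely weakly increasing $f$ could collapse $u(s^0)$ and $u(s)$ to the same value and destroy the strict inequality that Actual Balance demands. It is worth stating this explicitly in the proof.

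If one prefers the paper's remark that the order need not be complete, the argument can be phrased purely order-theoretically. Suppose $v$ is any function that preserves the strict order induced by $u$, meaning $u(s) > u(s')$ implies $v(s) > v(s')$. Step 2 uses only this implication, so the same proof goes through unchanged.
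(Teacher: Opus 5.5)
Your proof is correct and is essentially the paper's: you reuse the same witness feature $k$, note that the first clause of Actual Balance does not involve $u$ at all, and push $u(s^0) > u(s)$ through the transformation for each feasible $s$ with $s_k = b_k$. The paper defines a strict monotonic transformation by exactly the one-way implication $u(x) > u(y) \Rightarrow v(x) > v(y)$ from your last paragraph, and that is the better formulation, because your claimed equivalence with $v = f \circ u$ for a strictly increasing $f : \mathbb{R} \to \mathbb{R}$ can fail (passing from $\sigma(x+y)$ back to $x+y$ needs the logit, which is defined only on $(0,1)$), though your argument never relies on that equivalence.
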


\begin{proof}
Assume $u$ satisfies Actual Balance. Let $v$ be a strict monotonic transformation of $u$, so for all $x, y \in \mathbb{R}^n$, $u(x) > u(y)$ implies $v(x) > v(y)$. By assumption, some unmentioned feature is not minimized in $s^0$ and any feasible state where that feature is minimized is worse than $s^0$ according to $u$. Consider any $s \in \mathcal{S}$ where that feature is minimized. By assumption, $u(s^0) > u(s)$, so since $v$ is a strict monotonic transformation of $u$, $v(s^0) > v(s)$, so $v$ satisfies Actual Balance.
\end{proof}

\singlespacing

\printbibliography

\end{document}